\newcommand{\bx}{\mathbf{x}}
\newcommand{\by}{\mathbf{y}}
\newcommand{\bz}{\mathbf{z}}
\newtheorem{theorem}{Theorem}
\newtheorem{proposition}[theorem]{Proposition} 
\newtheorem{remark}[theorem]{Remark}
\newtheorem{corollary}[theorem]{Corollary}
\icmltitlerunning{General Invertible Transformations for Flow-based Generative Modeling}
\begin{document}

\twocolumn[
\icmltitle{General Invertible Transformations for Flow-based Generative Modeling}



\icmlsetsymbol{equal}{*}

\begin{icmlauthorlist}
\icmlauthor{Jakub M. Tomczak}{vu}
\end{icmlauthorlist}

\icmlaffiliation{vu}{Department of Computer Science, Vrije Universiteit Amsterdam, Amsterdam, the Netherlands}

\icmlcorrespondingauthor{Jakub M. Tomczal}{j.m.tomczak@vu.nl}

\icmlkeywords{Invertible Neural Networks, Deep Generative Modeling, Normalizing Flows}

\vskip 0.3in
]



\printAffiliationsAndNotice{\icmlEqualContribution} 

\begin{abstract}
In this paper, we present a new class of invertible transformations with an application to flow-based generative models. We indicate that many well-known invertible transformations in reversible logic and reversible neural networks could be derived from our proposition. Next, we propose two new coupling layers that are important building blocks of flow-based generative models. In the experiments on digit data, we present how these new coupling layers could be used in Integer Discrete Flows (IDF), and that they achieve better results than standard coupling layers used in IDF and RealNVP.
\end{abstract}

\section{Introduction}

\paragraph{Notation}
Let us consider a $D$-dimensional space $\bx \in \mathcal{X}$, e.g., $\mathcal{X} = \{ 0, 1 \}^{D}$, $\mathcal{X} = \mathbb{Z}^{D}$ or $\mathcal{X} = \mathbb{R}^{D}$. We define a binary invertible operator $\circ: \mathcal{X} \times \mathcal{X} \rightarrow \mathcal{X}$. The inverse operation to $\circ$ is denoted by $\bullet$. For instance, for the addition: $\circ \equiv +$ and $\bullet \equiv -$, and for the XOR operator: $\circ \equiv \oplus$ and $\bullet \equiv \oplus$. Further, we use the following notation: $\mathcal{X}_{i:j}$ is a subset of $\mathcal{X}$ corresponding to variables from the $i$-th dimension to the $j$-th dimension, $\bx_{i:j}$, we assume that $\mathcal{X}_{1:0} = \emptyset$ and $\mathcal{X}_{n+1:n} = \emptyset$

\paragraph{Invertible transformations}
In reversible computing \citep{toffoli1980reversible, fredkin1982conservative}, invertible logic gates allow inverting logic operations in order to potentially decrease energy consumption of computation \citep{bennett2003notes}. Typical invertible logic gates are:\\
$-$ \textit{the Feynman gate}: For $\bx \in \{0, 1\}^{2}$, the gate is defined as follows \citep{feynman1986quantum}:
        \begin{align}\label{eq:feynman_gate}
        y_1 &= x_1 \nonumber \\
        y_2 &= x_1 \oplus x_2 .
        \end{align}
$-$ \textit{the Toffoli gate}: For $\bx \in \{0, 1\}^{3}$, the gate is defined as follows \citep{toffoli1980reversible}:
        \begin{align}\label{eq:toffoli_gate}
        y_1 &= x_1 \nonumber\\
        y_2 &= x_2 \\
        y_3 &= x_3 \oplus  (x_1 \wedge x_2) . \nonumber
        \end{align}
$-$ \textit{the Fredkin gate}: 
        For $\bx\in \{0, 1\}^{3}$, the gate is defined as follows \citep{fredkin1982conservative}:
        \begin{align}\label{eq:fredkin_gate}
        y_1 &= x_1 \nonumber\\
        y_2 &= x_2 \oplus (x_1 \wedge (x_2 \oplus x_3)) \\
        y_3 &= x_3 \oplus (x_1 \wedge (x_2 \oplus x_3)) . \nonumber
        \end{align}
Invertible transformations play also a crucial role in reversible neural networks \citep{gomez2017reversible, chang2018reversible, mackay2018reversible}. For instance, an invertible transformation called a \textit{coupling layer} is an important building block in flow-based models \citep{dinh2016density}. It is defined as follows:
\begin{align}\label{eq:coupling_layer}
	\by_1 &= \bx_1 \nonumber\\
	\by_2 &= \exp\{\mathrm{NN}_{s}(\bx_1)\} \odot x_2 + \mathrm{NN}_{t}(\bx_1) ,
\end{align}
where $\odot$ is an element-wise multiplication, $\mathrm{NN}_{s}(\cdot)$ and $\mathrm{NN}_{t}(\cdot)$ denote arbitrary neural networks, and the input is divided into two parts, $\bx = [\bx_1, \bx_2]$, e.g., along the channel dimension. If $\mathrm{NN}_{s}(\cdot) \equiv 1$, and we stack two coupling layers with reversing the order of variables in between, then we obtain the reversible residual block \citep{gomez2017reversible}:
\begin{align}\label{eq:reversible_residual}
	\by_1 &= \bx_1 + \mathrm{NN}_{t,1}(\bx_2) \nonumber\\
	\by_2 &= \bx_2 + \mathrm{NN}_{t,2}(\by_1) .
\end{align}
Recently, \cite{hoogeboom2019integer} proposed a modification of the coupling layer for integer-valued variables:
\begin{align}\label{eq:coupling_layer_round}
	\by_1 &= \bx_1 \nonumber\\
	\by_2 &= \bx_2 + \lfloor \mathrm{NN}_{t}(x_1) \rceil,
\end{align}
where $\lfloor \cdot \rceil$ denotes the rounding operation. In order to allow applying bakcpropagation to the rounding operation, the straight through gradient estimator is used \citep{hoogeboom2019integer}.

\paragraph{Flow-based generative models}
There are three major groups of generative models: autoregressive models \cite{van2016conditional}, latent variable models\citep{goodfellow2014generative, kingma2013auto, rezende2014stochastic}, and flow-based models \citep{papamakarios2019normalizing}. The last approach takes advantage of the change of variables formula:
\begin{equation}
    p(\bx) = \pi(\bz = f^{-1}(\bx)) |\mathbf{J}_{f}(\bz)|^{-1},
\end{equation}
where $\pi(\cdot)$ is a known distribution (a \textit{base} distribution, e.g., Normal distribution), $f:\mathcal{X} \rightarrow \mathcal{X}$ is a bijective map, and $\mathbf{J}_{f}(\bz)$ denotes the Jacobian matrix.

The main challenge of flow-based generative models lies in formulating invertible transformations for which the Jacobian determinant is computationally tractable. In the simplest case, we can use volume-preserving transformations that result in $\mathbf{J}_{f}(\bz) = 1$, e.g., linear autoregressive flows \citep{kingma2016improved, tomczak2017improving} or Householder flows \citep{tomczak2016improving}. However, the volume-preserving flows cannot model arbitrary distributions, therefore, non-linear transformations are preferable. In \citep{rezende2015variational, berg2018sylvester, hoogeboom2020convolution} a specific form of non-linear transformations was constructed so that to take advantage of the matrix determinant lemma and its generalization to efficiently calculate the Jacobian determinant. Recently, the transformation used in \citep{rezende2015variational, berg2018sylvester} was further generalized to arbitrary contractive residual networks \citep{chen2019residual} and contractive densenets \citep{perugachi2021invertible} with the \textit{Russian roulette estimator} of the Jacobian determinant. Coupling layers constitute a different group of non-linear transformation that are used in flow-based models like RealNVP \citep{dinh2016density} and GLOW \citep{kingma2018glow}.

In the case of discrete variables, e.g., $\mathcal{X} = \mathbb{Z}^D$, the change of variables takes a simpler form due to the fact that there is no change of volume for probability mass functions:
\begin{equation}
    p(\bx) = \pi(\bz = f^{-1}(\bx)).
\end{equation}
To date, coupling layers with the rounding operator (Eq. \ref{eq:coupling_layer_round}) are typically used. The resulting flow-based models are called Integer Discrete Flows (IDF) \citep{hoogeboom2019integer, berg2020idf++} with a mixture of discretized logistic distributions \citep{salimans2017pixelcnn++} as the base distribution. 

\section{Our approach}


\subsection{General invertible transformations}

Our main contribution of this paper is a proposition of a new class of invertible transformations that generalize many invertible transformations in reversible computing and reversible deep learning.

\begin{proposition}\label{thrm:general_reversible_transformation}
Let us take $\bx, \by \in \mathcal{X}$. If binary transformations  $\circ$ and $\triangleright$ have inverses $\bullet$ and $\blacktriangleleft$, respectively, and $g_{2}, \ldots, g_D$ and $f_1, \ldots, f_D$ are arbitrary functions, where $g_{i}: \mathcal{X}_{1:i-1} \rightarrow \mathcal{X}_{i}$, $f_{i}: \mathcal{X}_{1:i-1}  \times \mathcal{X}_{i+1:n} \rightarrow \mathcal{X}_{i}$, then the following transformation from $\bx$ to $\by$:
\begin{align*}
	y_1 &= x_1 \circ f_{1}(\emptyset, \bx_{2:D}) \\
	y_2 &= \left(g_2(y_1) \triangleright x_2 \right) \circ f_{2}(y_1, \bx_{3:D}) \\
	\ldots & \\
	y_d &= \left(g_d(\by_{1:d-1}) \triangleright x_d \right) \circ f_{d}(\by_{1:d-1}, \bx_{d+1:D}) \\
	\ldots & \\
	y_D &= \left(g_D(\by_{1:D-1}) \triangleright x_D \right) \circ f_{D}(\by_{1:D-1}, \emptyset)
\end{align*}
is invertible.
\end{proposition}
\begin{proof}
In order to inverse $\by$ to $\bx$ we start from the last element to obtain the following:
\begin{align*}
	  x_D &= g_D(\by_{1:D-1}) \blacktriangleleft \left( y_D \bullet f_{D}(\by_{1:D-1}, \emptyset) \right) .
\end{align*}
Then, we can proceed with next expressions in the decreasing order (\textit{i.e.}, from $D-1$ to $1$) to eventually obtain:
\begin{align*}
    x_{D-1} &= g_{D-1}(\by_{1:D-2}) \blacktriangleleft \left( y_{D-1} \bullet f_{D-1}(\by_{1:D-2}, x_{D}) \right) \\
	\ldots & \\
	x_{d} &= g_{d}(\by_{1:d-1}) \blacktriangleleft \left( y_{d} \bullet f_{d}(\by_{1:d-1}, \bx_{d+1:D}) \right) \\
	\ldots & \\
	x_{2} &= g_{2}(y_{1}) \blacktriangleleft \left( y_{2} \bullet f_{2}(y_{1}, \bx_{3:D}) \right) \\
	x_1 &= y_1 \bullet f_{1}(\emptyset, \bx_{2:D}) .
\end{align*}
\end{proof}
Next, we show that many widely known invertible transformations could be derived from the proposed general invertible transformation.  First, for the space of binary variables, we present that our proposition could be used to obtain three of the most important reversible logic gates.

\begin{corollary}[Feynman gate]
Let us consider $\bx \in \{0, 1\}^{2}$, and $\circ \equiv \oplus$ and $\triangleright \equiv \oplus$ with $\bullet \equiv \oplus$ and $\blacktriangleleft \equiv \oplus$, where $\oplus$ is the XOR operation. Then, taking $g_2 \equiv 0$, $f_1(x_2) = 0$ and $f_2(y_1) = y_1$ results in the Feynman gate:
\begin{equation}\label{eq:feynman_1}
    \begin{array}{ll}
    y_1 =& \hspace{-2mm}x_1 \\
	y_2 =& \hspace{-2mm}x_2 \oplus x_1 .
    \end{array}
\end{equation}
\end{corollary}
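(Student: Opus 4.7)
The plan is to verify the corollary by direct instantiation of Proposition~\ref{thrm:general_reversible_transformation} in the case $D=2$, then simplifying under the specified choices. Since the proposition has already established invertibility for arbitrary $\circ, \triangleright, g_i, f_i$, no further invertibility argument is needed here; the corollary is purely a matching statement.

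First, I would write down the two equations produced by the proposition when $D=2$, namely $y_1 = x_1 \circ f_1(\emptyset, x_2)$ and $y_2 = \bigl(g_2(y_1) \triangleright x_2\bigr) \circ f_2(y_1, \emptyset)$. Next, I would substitute the concrete choices from the corollary: $\circ$ and $\triangleright$ both become $\oplus$, the constant function $f_1 \equiv 0$ is plugged into the first line, and $g_2 \equiv 0$ together with $f_2(y_1) = y_1$ are plugged into the second. Finally, I would use the boolean identity $a \oplus 0 = a$ twice to collapse $x_1 \oplus 0$ to $x_1$ and $0 \oplus x_2$ to $x_2$, yielding exactly the Feynman gate as in Eq.~\eqref{eq:feynman_gate}.

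The only point requiring a little care is bookkeeping of the function signatures. In the general statement $f_i$ takes two arguments (the prefix $\by_{1:i-1}$ and the suffix $\bx_{i+1:D}$), and for $D=2$ one of these is the empty set in each line, so the corollary's shorter notations $f_1(x_2)$ and $f_2(y_1)$ should be read as the corresponding restrictions. I would make this identification explicit in one sentence so the reader sees that $f_1(\emptyset, x_2) = 0$ and $f_2(y_1,\emptyset) = y_1$ are well-defined under the proposition's framework.

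There is no real obstacle: the argument is a substitution check. The most likely source of confusion, and therefore the place I would be most careful in the write-up, is to make clear that the choice $f_1(x_2)=0$ is a constant function (not a typo for "identity"), since that is what makes $y_1 = x_1$ fall out cleanly.
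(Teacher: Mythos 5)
Your proposal is correct and takes essentially the same route as the paper, which simply observes that the result follows by substituting the stated choices into Proposition~\ref{thrm:general_reversible_transformation} and simplifying with XOR properties. In fact your write-up is slightly more precise: the identity actually needed is $a \oplus 0 = a$ (the neutral-element property), which you invoke explicitly, whereas the paper's one-line proof attributes the simplification to ``idempotency'' of XOR --- a property that is not what is being used here.
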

\begin{proof}
The Eq. \ref{eq:feynman_1} follows from the idempotency of XOR.
\end{proof}

\begin{corollary}[Toffoli gate]
Let us consider $\bx  \in \{0, 1\}^{3}$, and $\circ \equiv \oplus$ and $\triangleright \equiv \oplus$ with $\bullet \equiv \oplus$ and $\blacktriangleleft \equiv \oplus$, where $\oplus$ is the XOR operation. Then, taking $g_2(y_1) \equiv 0$, $g_3(\by_{1:2} \equiv 0$, $f_1(\bx_{2:3}) \equiv 0$, $f_2(y_1, x_3) \equiv 0$ and $f_3(\by_{1:2}) = y_1 \wedge y_2$ results in the Toffoli gate:
\begin{align}
	y_1 &= x_1 \label{eq:toffoli_1}\\
	y_2 &= x_2 \label{eq:toffoli_2}\\
	y_3 &= x_3 \oplus (y_{1} \wedge y_{2}) \label{eq:toffoli_3}.
\end{align}
\end{corollary}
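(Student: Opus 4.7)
The plan is to verify the corollary by a direct substitution into the general scheme of Proposition~\ref{thrm:general_reversible_transformation}, exploiting the fact that $0$ is the identity for the XOR operator. This is purely a specialization argument: once all choices of $\circ$, $\triangleright$, $g_i$ and $f_i$ are plugged in, the three equations of the Toffoli gate should drop out line by line.

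First, I would write out the three lines of the general transformation for $D=3$:
\begin{align*}
y_1 &= x_1 \oplus f_1(\emptyset, \bx_{2:3}), \\
y_2 &= \bigl(g_2(y_1) \oplus x_2\bigr) \oplus f_2(y_1, x_3), \\
y_3 &= \bigl(g_3(\by_{1:2}) \oplus x_3\bigr) \oplus f_3(\by_{1:2}, \emptyset).
\end{align*}
Then I would substitute $f_1 \equiv 0$ and use $a \oplus 0 = a$ to obtain equation~\eqref{eq:toffoli_1}, substitute $g_2 \equiv 0$ and $f_2 \equiv 0$ to obtain~\eqref{eq:toffoli_2}, and substitute $g_3 \equiv 0$ and $f_3(\by_{1:2}) = y_1 \wedge y_2$ to obtain $y_3 = x_3 \oplus (y_1 \wedge y_2)$, which is~\eqref{eq:toffoli_3}.

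Invertibility is automatic from Proposition~\ref{thrm:general_reversible_transformation} because XOR is its own inverse, so $\bullet \equiv \oplus$ and $\blacktriangleleft \equiv \oplus$ recover $\bx$ from $\by$ in the usual way (in particular $x_3 = y_3 \oplus (y_1 \wedge y_2)$ is the standard inverse of the Toffoli gate). There is really no obstacle here; the only mild bookkeeping point is to be explicit about using the identity $a \oplus 0 = a$ each time a trivial $g_i$ or $f_i$ is substituted, so that the reader sees how the auxiliary pieces of the template vanish and leave only the desired gate equations.
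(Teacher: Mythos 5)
Your proof is correct and takes essentially the same route as the paper, which disposes of the corollary in one line by direct substitution into Proposition~\ref{thrm:general_reversible_transformation}; you simply spell out the $D=3$ instantiation that the paper leaves implicit. If anything you are slightly more precise: the property doing the work is the identity $a \oplus 0 = a$, as you say, whereas the paper attributes it to ``idempotency'' of XOR (which its own appendix notes does not hold, since $x \oplus x = 0$).
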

\begin{proof}
The Eqs. \ref{eq:toffoli_1} - \ref{eq:toffoli_3} follow from the idempotency of the XOR operator.
\end{proof}

\begin{corollary}[Fredkin gate]\label{cor:Fredkin}
Let us consider $\bx  \in \{0, 1\}^4$, and $\circ \equiv \oplus$ and $\triangleright \equiv \oplus$ with $\bullet \equiv \oplus$ and $\blacktriangleleft \equiv \oplus$, where $\oplus$ is the XOR operation. Then, taking $x_1 \equiv 0$, $g_2(y_1)\equiv 0$, $g_3(\by_{1:2})\equiv 0$, $g_4(\by_{1:3})\equiv 0$, $f_1(\bx_{2:4}) = x_2 \wedge (x_3 \oplus x_4$), $f_2(y_1, \bx_{3:4}) \equiv 0$, $f_3(\by_{1:2}, x_4) = y_1$ and $f_4(\by_{1:3}) \equiv y_1$ results in the Fredkin gate:
\begin{align}
	y_1 &= x_1 \oplus \left( x_2 \wedge (x_3 \oplus x_4) \label{eq:fredkin_1} \right)\\
	y_2 &= x_2 \oplus 0\label{eq:fredkin_2}\\
	y_3 &= x_3 \oplus y_1 \label{eq:fredkin_3}\\
	y_4 &= x_4 \oplus y_1 \label{eq:fredkin_4}
\end{align}
\end{corollary}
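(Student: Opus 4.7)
The plan is to instantiate Proposition~\ref{thrm:general_reversible_transformation} with $D = 4$, the specified operators $\circ = \triangleright = \oplus$, and the specific choices of $g_i$ and $f_i$ given in the corollary, and then read off that the four output equations collapse, after simplification, to Eqs.~\ref{eq:fredkin_1}--\ref{eq:fredkin_4}. Nothing beyond direct substitution should be required.

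Concretely, I would handle the four components in order. For $y_1$, apply the first line of the proposition with $f_1(\bx_{2:4}) = x_2 \wedge (x_3 \oplus x_4)$ to obtain $y_1 = x_1 \oplus (x_2 \wedge (x_3 \oplus x_4))$, which is exactly Eq.~\ref{eq:fredkin_1}. For $y_2$, substitute $g_2(y_1) \equiv 0$ and $f_2(y_1, \bx_{3:4}) \equiv 0$; using the identity $0 \oplus a = a$, the expression $(0 \oplus x_2) \oplus 0$ reduces to $x_2$, giving Eq.~\ref{eq:fredkin_2}. For $y_3$ and $y_4$, substitute $g_3 \equiv g_4 \equiv 0$ together with $f_3(\by_{1:2}, x_4) = y_1$ and $f_4(\by_{1:3}) = y_1$; the outer XORs with $0$ vanish and leave $y_3 = x_3 \oplus y_1$ and $y_4 = x_4 \oplus y_1$, matching Eqs.~\ref{eq:fredkin_3}--\ref{eq:fredkin_4}. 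The auxiliary assumption $x_1 \equiv 0$ is only needed if one wants to recover the three-wire Fredkin gate of Eq.~\ref{eq:fredkin_gate} from the four-wire form, by substituting $x_1 = 0$ into Eq.~\ref{eq:fredkin_1}.

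There is no real obstacle here: the proof is a line-by-line pattern match, and the only algebraic fact used is the idempotency and identity behaviour of XOR (\textit{i.e.}, $0 \oplus a = a$ and $\oplus$ being its own inverse), exactly as in the Feynman and Toffoli corollaries above. I would therefore keep the write-up to one sentence noting the substitution and one sentence citing the XOR identity, mirroring the concise style of the preceding two corollaries.
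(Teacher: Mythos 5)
Your proposal is correct: direct substitution of the stated $g_i$ and $f_i$ into Proposition~\ref{thrm:general_reversible_transformation} followed by the identity $0 \oplus a = a$ is exactly the argument the paper intends (it omits an explicit proof for this corollary, but the preceding Feynman and Toffoli corollaries are proved the same way), and your observation about the role of the dummy variable $x_1 \equiv 0$ matches the paper's own remark. No gaps.
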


\begin{remark}[On the Fredkin gate]
Comparing equations \ref{eq:fredkin_1}--\ref{eq:fredkin_4} with the definition of Fredkin gate we notice that in Corollary \ref{cor:Fredkin} we have to introduce an additional equation to be consistent with the Proposition \ref{thrm:general_reversible_transformation}. Moreover, we introduced a \textit{dummy} variable $x_1$ that always equals $0$.
\end{remark}

Moreover, we observe that our proposition generalizes invertible layers in neural networks.

\begin{corollary}[A coupling layer]
Let us consider $\bx = [\bx_1, \bx_2]^{\top}$, where $\mathcal{X}_i = \mathbb{R}^{D_{i}}$, and $\circ \equiv +$, $\triangleright \equiv \odot$ with $\bullet \equiv -$ and $\blacktriangleleft \equiv \oslash$, where $\odot$ and $\oslash$ denote elementwise multiplication and division, respectively. Then, taking $g_2(\by_1) = \exp(\mathrm{NN}_s(\by_1))$, $f_1(\bx_2) = 0$ and $f_2(\by_1) = \mathrm{NN}_t(\by_1)$, where $\mathrm{NN}_s$, $\mathrm{NN}_t$ are neural networks, results in the coupling layer \citep{dinh2016density}:
\begin{align}
	\by_1 &= \bx_1 \\
	\by_2 &= \exp(\mathrm{NN}_s(\by_1)) \odot \bx_2 + \mathrm{NN}_t(\by_1) .
\end{align}
\end{corollary}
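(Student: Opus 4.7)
The plan is to recognize this as a direct instantiation of Proposition~\ref{thrm:general_reversible_transformation} with $D=2$, reinterpreting the scalar index $d$ as a block index so that each ``coordinate'' $\bx_i$ lives in $\mathbb{R}^{D_i}$. Since the binary operations $\circ \equiv +$ and $\triangleright \equiv \odot$ both act elementwise, they lift immediately to arbitrary-dimensional blocks, and their inverses $\bullet \equiv -$ and $\blacktriangleleft \equiv \oslash$ do so as well (with division being well defined because $\exp(\mathrm{NN}_s(\by_1))$ is strictly positive). With this observation in place, the proof reduces to plugging the specified choices of $\circ, \triangleright, f_1, f_2, g_2$ into the two equations that Proposition~\ref{thrm:general_reversible_transformation} produces for $D=2$.

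Concretely, I would first write the general form with $D=2$, namely
\begin{align*}
\by_1 &= \bx_1 \circ f_1(\emptyset, \bx_2), \\
\by_2 &= \bigl(g_2(\by_1) \triangleright \bx_2\bigr) \circ f_2(\by_1, \emptyset),
\end{align*}
and then substitute. For $\by_1$, we get $\by_1 = \bx_1 + 0 = \bx_1$, since $f_1(\bx_2) = 0$ by hypothesis; this is the identity branch of the coupling layer. For $\by_2$, substituting $g_2(\by_1) = \exp(\mathrm{NN}_s(\by_1))$, $\triangleright = \odot$, $\circ = +$, and $f_2(\by_1) = \mathrm{NN}_t(\by_1)$ yields exactly $\by_2 = \exp(\mathrm{NN}_s(\by_1)) \odot \bx_2 + \mathrm{NN}_t(\by_1)$, which is the affine coupling layer of \cite{dinh2016density}. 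Invertibility is then inherited from Proposition~\ref{thrm:general_reversible_transformation}, recovering the familiar inverse $\bx_2 = (\by_2 - \mathrm{NN}_t(\by_1)) \oslash \exp(\mathrm{NN}_s(\by_1))$.

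There is no real obstacle here: the only subtle point worth flagging, if space permits, is the block-versus-scalar reading of the proposition's indices, together with the implicit positivity of the scale $\exp(\mathrm{NN}_s(\by_1))$ that makes $\blacktriangleleft \equiv \oslash$ a genuine inverse of $\triangleright \equiv \odot$. Everything else is mechanical substitution.
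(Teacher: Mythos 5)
Your proof is correct and matches the paper's intent exactly: the paper states this corollary without any written proof, treating it as an immediate substitution of the given choices into Proposition~\ref{thrm:general_reversible_transformation} with $D=2$ read block-wise, which is precisely what you carry out. Your added remarks on the block-versus-scalar indexing and the positivity of $\exp(\mathrm{NN}_s(\by_1))$ ensuring $\oslash$ is a genuine inverse are sensible and go slightly beyond what the paper makes explicit.
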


\begin{corollary}[A reversible residual layer]
Let us consider $\bx = [\bx_1, \bx_2]^{\top}$, where $\mathcal{X}_i = \mathbb{R}^{D_{i}}$, and $\circ \equiv +$, $\triangleright \equiv \odot$ with $\bullet \equiv -$ and $\blacktriangleleft \equiv \oslash$, where $\odot$ and $\oslash$ denote elementwise multiplication and division, respectively. Then, taking $g_2(\by_1) \equiv 1$, $f_1(\bx_2) = \mathrm{NN}_{1}(\bx_2)$ and $f_2(\by_1) = \mathrm{NN}_{2}(\by_1)$, where $\mathrm{NN}$ is a neural network, results in the reversible residual layer proposed in \citep{gomez2017reversible}:
\begin{align}
	\by_1 &= \bx_1 + \mathrm{NN}_{1}(\bx_2) \\
	\by_2 &= \bx_2+ \mathrm{NN}_{2}(\by_1) .
\end{align}
\end{corollary}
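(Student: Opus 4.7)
The plan is to apply Proposition \ref{thrm:general_reversible_transformation} directly in the two-block case and verify that the given substitutions collapse to the stated reversible residual layer. First I would set $D=2$ in the proposition, where the two ``dimensions'' index the blocks $\bx_1$ and $\bx_2$. With $\circ \equiv +$ and $\triangleright \equiv \odot$, the proposition's template becomes
\begin{align*}
\by_1 &= \bx_1 + f_1(\emptyset, \bx_2), \\
\by_2 &= \bigl(g_2(\by_1) \odot \bx_2\bigr) + f_2(\by_1, \emptyset).
\end{align*}

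Next I would plug in the specific choices $g_2(\by_1) \equiv 1$, $f_1(\emptyset, \bx_2) = \mathrm{NN}_1(\bx_2)$, and $f_2(\by_1, \emptyset) = \mathrm{NN}_2(\by_1)$. The first equation immediately yields $\by_1 = \bx_1 + \mathrm{NN}_1(\bx_2)$. For the second equation, the only non-trivial step is to note that $1$ is the identity for the Hadamard product $\odot$, so $1 \odot \bx_2 = \bx_2$, giving $\by_2 = \bx_2 + \mathrm{NN}_2(\by_1)$, which is exactly the reversible residual layer of \citep{gomez2017reversible}.

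Finally, invertibility is inherited from the proposition since $(+, -)$ and $(\odot, \oslash)$ form valid inverse pairs on $\mathbb{R}^{D_i}$ (with $\oslash$ well-defined whenever the left operand is nowhere zero, which holds here because $g_2 \equiv 1$). There is no real obstacle: the result is a direct specialization, and the only subtlety worth flagging is that one must interpret the ``dimensions'' in Proposition \ref{thrm:general_reversible_transformation} as blocks $\bx_1, \bx_2$ rather than individual scalar coordinates, which is consistent with how the coupling layer corollary above was stated.
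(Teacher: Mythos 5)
Your proof is correct and follows exactly the route the paper intends: the paper states this corollary without an explicit proof, treating it as a direct specialization of Proposition \ref{thrm:general_reversible_transformation}, and your substitution of $g_2 \equiv 1$, $f_1 = \mathrm{NN}_1$, $f_2 = \mathrm{NN}_2$ into the $D=2$ block template is precisely that verification. Your added remarks on $1$ being the identity for $\odot$ and on interpreting the proposition's coordinates as blocks are sensible clarifications rather than deviations.
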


\begin{remark}[On the reversible residual layer]\label{remark:reversible_residual_layer}
According to Proposition \ref{thrm:general_reversible_transformation}, we can further generalize the reversible residual layer proposed in \citep{gomez2017reversible} by taking $g_2(\by_1) = \exp(\mathrm{NN}_{3}(\by_1))$ that would result in the following invertible layer:
\begin{align}
	\by_1 &= \bx_1 + \mathrm{NN}_{1}(\bx_2) \\
	\by_2 &= \exp(\mathrm{NN}_{3}(\by_1)) \odot \bx_2+ \mathrm{NN}_{2}(\by_1) .
\end{align}

Interestingly, we can calculate the Jacobian of such transformation that takes the following form:
\begin{align}
\mathbf{J}(\bz) &= 
    \left[
    \begin{array}{cc}
        \frac{\partial \by_1}{\partial \bx_1} & \frac{\partial \by_1}{\partial \bx_2} \\
         & \\
        \frac{\partial \by_2}{\partial \bx_1} & \frac{\partial \by_2}{\partial \bx_2}
    \end{array}
    \right] 
    =
    \left[
    \begin{array}{cc}
        \mathbf{A} & \mathbf{B} \\
         & \\
        \mathbf{C} & \mathbf{D}
    \end{array}
    \right] \label{eq:jacobian_block_matrix}
\end{align}
where $\mathbf{A} = \mathbf{I}$, i.e., the identity matrix. Then, $\mathbf{A} \mathbf{C} = \mathbf{C} \mathbf{A}$ and according to Theorem 3 on determinants of block matrices in \citep{silvester2000determinants}, the logarithm of the Jacobian-determinant equals:
\begin{equation}
    \log |\det \mathbf{J}(\bz)| = \sum_{i} \mathrm{NN}_{3,i}(\by_1) .
\end{equation}
\end{remark}

\begin{corollary}[A reversible differential mutation]
Let us consider $\bx_1, \bx_2, \bx_3 \in \mathbb{R}^{D}$, $\gamma \in \mathbb{R}_{+}$, and $\circ \equiv +$, $\triangleright \equiv \odot$ with $\bullet \equiv -$. Then, taking $g_2(\by_1) \equiv 1$, $g_3(\by_{1:2}) \equiv 1$, $f_1(\bx_{2:3}) = \gamma (\bx_2 - \bx_3)$, $f_2(\by_1, \bx_{3}) = \gamma (\bx_3 - \by_1)$, and $f_3(\by_{1:2}) = \gamma (\by_1 - \by_2)$, results in the reversible differential mutation proposed in \citep{tomczak2020differential}:
\begin{align}
	\by_1 &= \bx_1 + \gamma (\bx_2 - \bx_3) \\
	\by_2 &= \bx_2 + \gamma (\bx_3 - \by_1) \\
	\by_3 &= \bx_3 + \gamma (\by_1 - \by_2) .
\end{align}
\end{corollary}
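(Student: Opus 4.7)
The plan is to deduce this corollary as a direct specialization of Proposition~\ref{thrm:general_reversible_transformation}, treating the three vectors $\bx_1,\bx_2,\bx_3\in\mathbb{R}^D$ as the three ``blocks'' of the input, so that $n=3$ and $\mathcal{X}_i=\mathbb{R}^D$. Since the proposition is stated blockwise (each $y_i,x_i$ may itself be a vector, with $g_i,f_i$ returning vectors of the appropriate size), no extension of the result is needed; all that is required is to substitute the prescribed choices of $\circ,\triangleright,g_i,f_i$ and to verify that the resulting three lines match the reversible differential mutation of \citet{tomczak2020differential}.

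Concretely, I would plug $\circ\equiv+$ and $\triangleright\equiv\odot$ into the generic line
\[
y_d = \bigl(g_d(\by_{1:d-1})\triangleright x_d\bigr)\circ f_d(\by_{1:d-1},\bx_{d+1:D})
\]
to obtain $y_d = g_d(\by_{1:d-1})\odot \bx_d + f_d(\by_{1:d-1},\bx_{d+1:3})$ for $d=1,2,3$, with the convention $g_1\equiv 1$ implicit in the first line of the proposition. Then I would substitute the stated $g_d$'s and $f_d$'s line by line: for $d=1$, $f_1(\bx_{2:3})=\gamma(\bx_2-\bx_3)$ yields $\by_1=\bx_1+\gamma(\bx_2-\bx_3)$; for $d=2$, $g_2\equiv 1$ and $f_2(\by_1,\bx_3)=\gamma(\bx_3-\by_1)$ yield $\by_2=\bx_2+\gamma(\bx_3-\by_1)$; and for $d=3$, $g_3\equiv 1$ together with $f_3(\by_{1:2})=\gamma(\by_1-\by_2)$ yield $\by_3=\bx_3+\gamma(\by_1-\by_2)$. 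These are exactly the three lines in the corollary.

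Invertibility then follows for free from Proposition~\ref{thrm:general_reversible_transformation}, since $\bullet\equiv-$ is a genuine inverse of $+$ and the chosen $g_d$'s are nonzero (in fact identically $1$), so the elementwise $\blacktriangleleft\equiv\oslash$ is well defined on the relevant values. There is essentially no obstacle to overcome here: the only point worth flagging is that Proposition~\ref{thrm:general_reversible_transformation} is formulated for scalar-valued coordinates $x_i,y_i$, but since $+$ and $\odot$ act elementwise and the argument in the proof of the proposition is purely a back-substitution from $d=D$ down to $d=1$, the same reasoning carries over verbatim to the blockwise setting used here. Thus the corollary is obtained by pure substitution, and the main ``work'' is merely bookkeeping to confirm that the indexing and argument lists of $f_d$ and $g_d$ match the template of the proposition.
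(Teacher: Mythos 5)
Your proposal is correct and matches the paper's (implicit) argument: the paper offers no separate proof for this corollary, treating it exactly as you do — a direct substitution of the stated $\circ$, $\triangleright$, $g_d$, $f_d$ into the template of Proposition~\ref{thrm:general_reversible_transformation}, with invertibility inherited from that proposition and the elementwise/blockwise reading understood throughout. Your extra remarks (that $g_d\equiv 1$ makes $\blacktriangleleft\equiv\oslash$ harmless and that the back-substitution argument carries over verbatim to vector-valued blocks) are accurate and, if anything, slightly more careful than the paper itself.
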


\subsection{General Invertible Transformations for Integer Discrete Flows}

We propose to utilize our general invertible transformations in IDF. For this purpose, we formulate two new coupling layers that fulfill Proposition \ref{thrm:general_reversible_transformation}, namely:\\
$-$ We divide the input into four parts, $\bx = [\bx_1, \bx_2, \bx_3, \bx_4]$:
    \begin{equation}\label{eq:idf4}
        \begin{aligned}
            \by_1 &= \bx_1 + \lfloor \mathrm{NN}_{t,1}(\bx_{2:4}) \rceil \\
            \by_2 &= \bx_2 + \lfloor \mathrm{NN}_{t,2}(\by_1, \bx_{3:4}) \rceil \\
            \by_3 &= \bx_3 + \lfloor \mathrm{NN}_{t,3}(\by_{1:2}, \bx_4) \rceil \\
            \by_4 &= \bx_4 + \lfloor \mathrm{NN}_{t,4}(\by_{1:3}) \rceil 
        \end{aligned}
    \end{equation}
$-$ We divide the input into eight parts, $\bx = [\bx_1, \ldots, \bx_8]$. Then, we formulate the coupling layer analogically to (\ref{eq:idf4}).

Further, we use the discretized two-parameter logistic distribution. It could be expressed as a difference of the logistic cumulative distribution functions \cite{salimans2017pixelcnn++} or analytically \cite{chakraborty2016new}.

\section{Experiments}

\textbf{Data} In the experiment, we use a toy dataset of handwritten digits available in Scikit-learn\footnote{\url{https://scikit-learn.org/stable/datasets/index.html\#digits-dataset}} that consists of $1,797$ images. Each image consists of $64$ pixels ($8$px$\times8$px).\\
\textbf{Models} In the experiment, we compare the following models:
RealNVP with uniform dequantization and the standard Gaussian base distribution (\textsc{RealNVP}), RealNVP with the coupling layer in Remark \ref{remark:reversible_residual_layer} and the standard Gaussian base distribution (\textsc{RealNVP2}), IDF with the coupling layer in (\ref{eq:coupling_layer_round}) (\textsc{RealNVP}), IDF with the coupling layer in (\ref{eq:idf4}) (\textsc{IDF4}), IDF with the coupling layer for eight parts (\textsc{IDF8}). In all models, we used the order permutation (i.e., a matrix with ones on the anti-diagonal and zeros elsewhere) after each coupling layer.\\
In order to keep a similar number of weights, we used 16 flows for \textsc{IDF}, 8 flows for \textsc{RealNVP}, 4 flows for \textsc{RealNVP2}, 4 flows for \textsc{IDF4}, and 2 flows for \textsc{IDF8}. All models have roughly $1.32$M weights. For \textsc{IDF}, \textsc{IDF4}, and \textsc{IDF8} we utilized the following neural networks for transitions ($\mathrm{NN}_{t}$):
$$
\mathrm{Linear}(D_{in}, 256) \rightarrow \mathrm{LeakyReLU} \rightarrow \mathrm{Linear}(256, 256) \rightarrow 
$$
$$
\mathrm{LeakyReLU} \rightarrow \mathrm{Linear}(256, D_{out})
$$
and for \textsc{RealNVP}, we additionally used the following neural networks for scaling ($\mathrm{NN}_{s}$):
$$
\mathrm{Linear}(D_{in}, 256) \rightarrow \mathrm{LeakyReLU} \rightarrow \mathrm{Linear}(256, 256) \rightarrow 
$$
$$
\mathrm{LeakyReLU} \rightarrow \mathrm{Linear}(256, D_{out}) \rightarrow \mathrm{Tanh}
$$
\textbf{Training \& Evaluation} We compare the models using the negative-log likelihood (nll). We train each model using $1,000$ images, and the mini-batch equals $64$. Moreover, we take $350$ images for validation and $447$ for testing. Each model is trained $5$ times. During training, we use the early stopping with the patience equal $20$ epochs, and the best performing model on the validation set is later evaluated on the test set. The Adam optimizer with the learning rate equal $0.001$ was used.\\
\textbf{Code} The experiments could be reproduced by running the code available at: \url{https://github.com/jmtomczak/git_flow}\\

\textbf{Results} In Figure \ref{fig:results}, we present aggregated results for the five models. 
Moreover, in Figure \ref{fig:samples}, examples of unconditional samples are presented, together with a real sample.

\begin{figure}[htbp]
    \centering
    \includegraphics[width=1.\columnwidth]{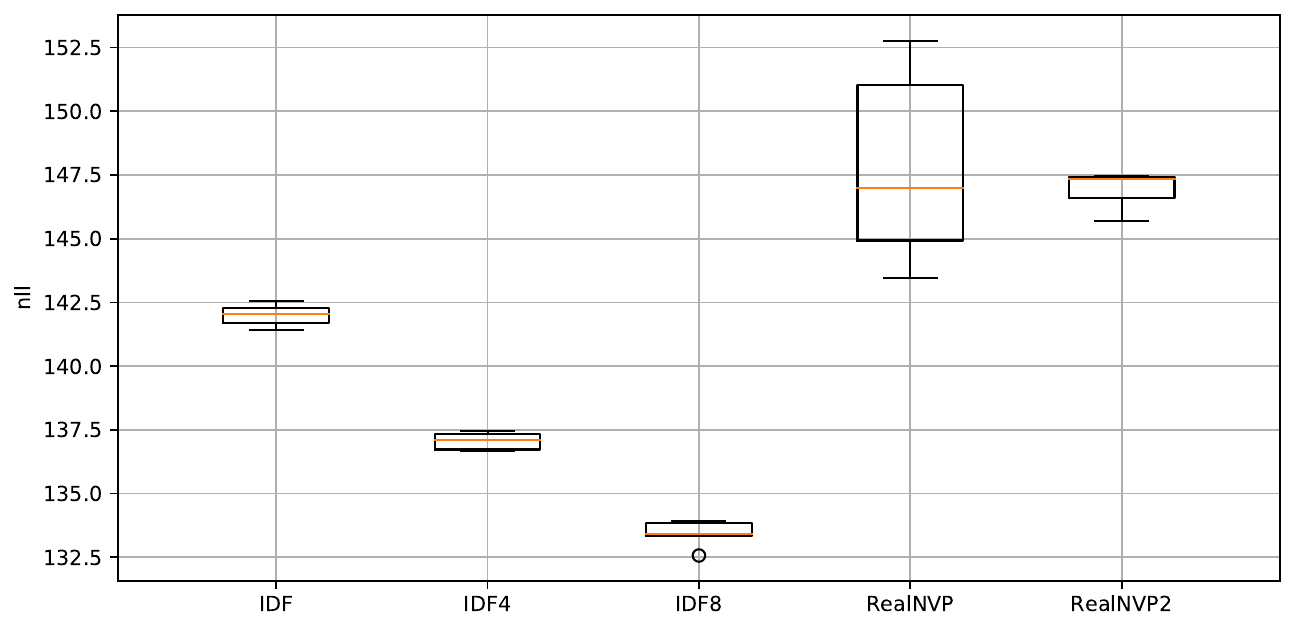}
    \vskip -5mm
    \caption{The aggregated results for the four models on the test set.}
    \label{fig:results}
\end{figure}


\addtolength{\tabcolsep}{6pt}
\begin{figure}[htbp]
    \begin{tabular}{ccc}
        \includegraphics[scale=0.16]{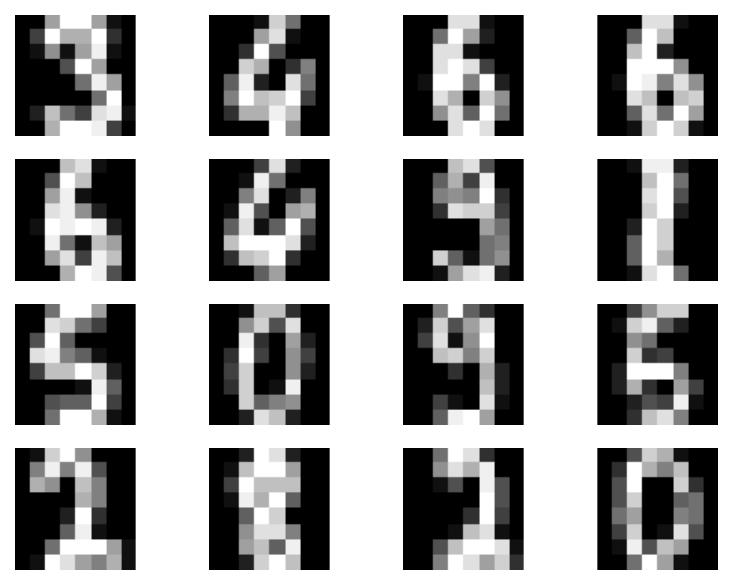} & \includegraphics[scale=0.16]{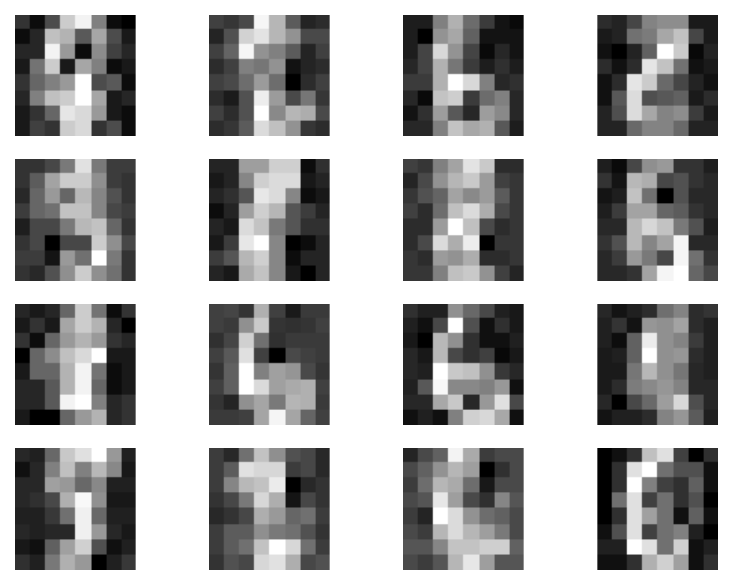} & \includegraphics[scale=0.16]{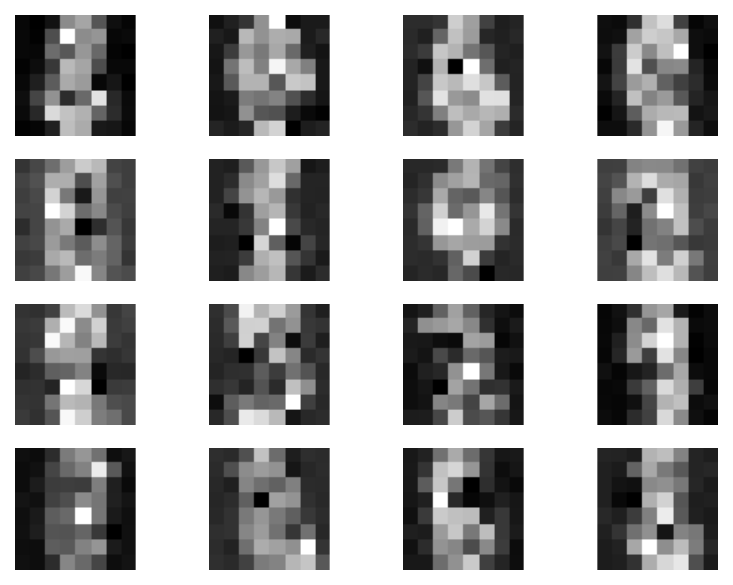} \\
        real & \textsc{IDF} & \textsc{RealNVP} \\
         &  & \\
        \includegraphics[scale=0.16]{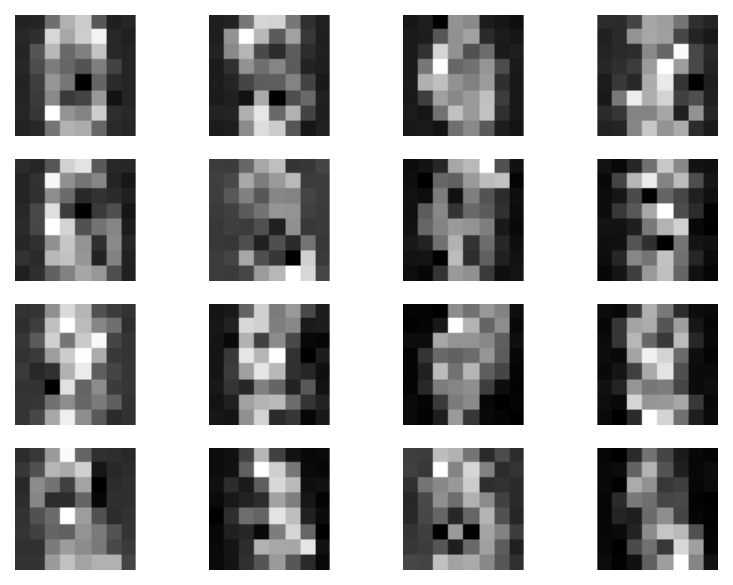} & \includegraphics[scale=0.16]{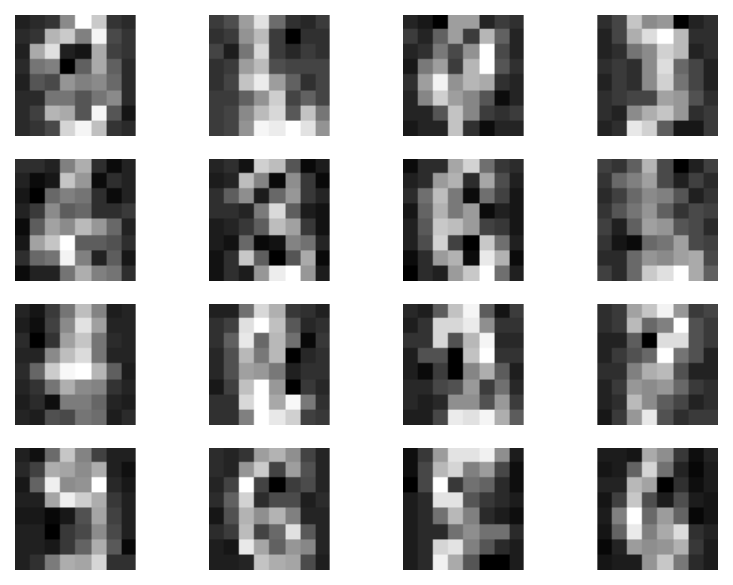} & \includegraphics[scale=0.16]{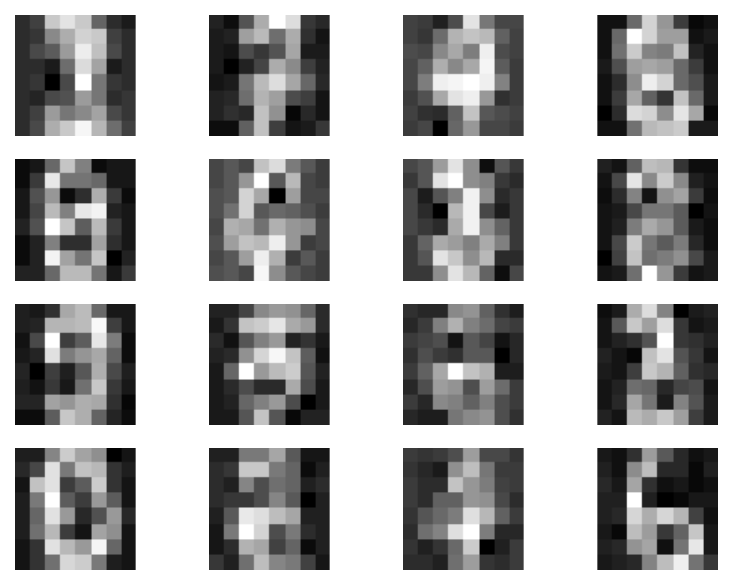} \\
        \textsc{RealNVP2} & \textsc{IDF4} & \textsc{IDF8} \\
    \end{tabular}
    \vskip -4mm
    \caption{Unconditional samples from the five models.}
    \label{fig:samples}
\end{figure}
\addtolength{\tabcolsep}{1pt}

First, we notice that \textsc{IDF} performs better than \textsc{RealNVP} and \textsc{RealNVP2}. In this paper, we use fully-connected neural networks and small toy data. Nevertheless, it is interesting to see that \textsc{IDF} could perform better than a widely used continuous flow-based model with dequantization. Moreover, it seems that the new coupling layer presented in Remark \ref{remark:reversible_residual_layer} is more stable in terms of final results
, however, the difference in performance is statistically insignificant. Second, we observe that our proposition of invertible transformations results in improved nll. The proposed coupling layer with $8$ partitions performed the best in terms of nll, and the coupling layer with $4$ partitions also outperformed \textsc{IDF}, \textsc{RealNVP} and \textsc{RealNVP2}. We want to highlight that all models have almost identical numbers of weights, thus, these results are not caused by taking larger neural networks. The samples presented in Figure \ref{fig:samples} are rather hard to analyze due to their small size ($8$px$\times 8$px). Nevertheless, we notice that all IDFs generated crisper images than \textsc{RealNVP} and \textsc{RealNVP2}. Moreover, it seems that \textsc{IDF4} and \textsc{IDF8} seem to produce digits of higher visual quality. 

\section{Conclusion}

In this paper, we proposed a new class of invertible transformations. We showed that many well-known invertible transformations could be derived from our proposition. Moreover, we proposed two coupling layers and presented how they could be utilized in flow-based models for integer values (Integer Discrete Flows). Our preliminary experiments on the digits data indicate that the new coupling layers result in better negative log-likelihood values than for \textsc{IDF} and \textsc{RealNVP}. These results are promising and will be pursued in the future on more challenging datasets.




\nocite{langley00}

\bibliography{example_paper}
\bibliographystyle{icml2021}



\end{document}